
\documentclass{article}

\usepackage{microtype}
\usepackage{graphicx}
\usepackage{subfigure}
\usepackage{booktabs} 

\usepackage{hyperref}


\usepackage[accepted]{icml2023}

\usepackage{mathtools}


\usepackage{amsmath,amsfonts,bm}









\def\eqref#1{equation~\ref{#1}}









\def\1{\bm{1}}










\DeclareMathAlphabet{\mathsfit}{\encodingdefault}{\sfdefault}{m}{sl}
\SetMathAlphabet{\mathsfit}{bold}{\encodingdefault}{\sfdefault}{bx}{n}













\usepackage{hyperref}
\usepackage{url}
\usepackage{graphicx}
\usepackage{amsmath}
\usepackage{bbding}
\usepackage{pifont}
\usepackage{wasysym}
\usepackage{amssymb}
\usepackage{amsthm}
\usepackage{bm}
\usepackage{booktabs}
\usepackage{multirow}
\usepackage{caption}
\usepackage{tabularx}
\usepackage{enumitem}
\usepackage{bbm}

\usepackage{comment}

\newtheorem{theorem}{Theorem}

\newtheorem{proposition}{Proposition}

\newtheorem{lemma}{Lemma}

\usepackage{float}
\usepackage{wrapfig,lipsum,booktabs}
\newfloat{figtab}{htb}{fgtb}
\makeatletter
  \newcommand\figcaption{\def\@captype{figure}\caption}
  \newcommand\tabcaption{\def\@captype{table}\caption}
\makeatother

\usepackage{array}
\newcommand{\PreserveBackslash}[1]{\let\temp=\\#1\let\\=\temp}
\newcolumntype{C}[1]{>{\PreserveBackslash\centering}p{#1}}
\newcolumntype{R}[1]{>{\PreserveBackslash\raggedleft}p{#1}}
\newcolumntype{L}[1]{>{\PreserveBackslash\raggedright}p{#1}}

\usepackage[capitalize,noabbrev]{cleveref}
\theoremstyle{plain}
\theoremstyle{definition}

\theoremstyle{remark}

\usepackage[textsize=tiny]{todonotes}

\newcommand{\qa}[1]{{\textcolor{Emerald}{{#1}}}}

\icmltitlerunning{{Bayesian Reprameterized RCRL}}

\begin{document}

\twocolumn[
\icmltitle{{Bayesian Reparameterization of Reward-Conditioned \\ Reinforcement Learning with Energy-based Models}}



\icmlsetsymbol{equal}{*}

\begin{icmlauthorlist}
\icmlauthor{Wenhao Ding}{equal,cmu}
\icmlauthor{Tong Che}{equal,nvidia}
\icmlauthor{Ding Zhao}{cmu}
\icmlauthor{Marco Pavone}{nvidia,stanford}
\end{icmlauthorlist}

\icmlaffiliation{cmu}{Carnegie Mellon University, Pittsburgh, PA, US}
\icmlaffiliation{nvidia}{NVIDIA Research, Santa Clara, CA, US}
\icmlaffiliation{stanford}{Stanford University, Palo Alto, CA, US}

\icmlcorrespondingauthor{Wenhao Ding}{wenhaod@andrew.cmu.edu}
\icmlcorrespondingauthor{Tong Che}{tongc@nvidia.com}

\icmlkeywords{Reward-conditioned reinforcement learning, Energy-based model, Offline reinforcement learning}

\vskip 0.3in
]



\printAffiliationsAndNotice{\icmlEqualContribution} 


\begin{abstract}
Recently, reward-conditioned reinforcement learning (RCRL) has gained popularity due to its simplicity, flexibility, and off-policy nature. However, we will show that current RCRL approaches are fundamentally limited and fail to address two critical challenges of RCRL -- improving generalization on high reward-to-go (RTG) inputs, and avoiding out-of-distribution (OOD) RTG queries during testing time. To address these challenges when training vanilla RCRL architectures, we propose Bayesian Reparameterized RCRL (BR-RCRL), a novel set of inductive biases for RCRL inspired by Bayes' theorem. BR-RCRL removes a core obstacle preventing vanilla RCRL from generalizing on high RTG inputs -- a tendency that the model treats different RTG inputs as independent values, which we term ``RTG Independence". BR-RCRL also allows us to design an accompanying adaptive inference method, which maximizes total returns while avoiding OOD queries that yield unpredictable behaviors in vanilla RCRL methods. We show that BR-RCRL achieves state-of-the-art performance on the Gym-Mujoco and Atari offline RL benchmarks, improving upon vanilla RCRL by up to 11\%. 
\end{abstract}

\section{Introduction}

Reinforcement learning (RL) aims at learning policies to maximize cumulative rewards by trial and error~\cite{sutton1998introduction}. It was shown that when combined with deep neural networks, deep RL is able to learn powerful policies for complex decision-making tasks using only self-generated data~\cite{mnih2013playing, akkaya2019solving, kiran2021deep}. RL algorithms can be divided into two classes in terms of data usage: on-policy RL~\cite{schulman2017proximal, schulman2015trust} algorithms have to be trained on data that the current policy learner generates, while off-policy RL algorithms~\cite{haarnoja2018soft, lillicrap2015continuous} can be trained on data that is generated by some different policies. It is a common belief that off-policy algorithms are more sample efficient than on-policy algorithms~\cite{prudencio2022survey}.

One recent proposed off-policy RL paradigm is reward-conditioned reinforcement learning (RCRL) \cite{kumar2019reward, chen2021decision, janner2021offline, srivastava2019training, ajay2022conditional}, which transforms the RL problem into a conditional sequence modeling problem. The general idea of RCRL is straightforward: we train an RTG (reward-to-go)-conditioned generative model using off-policy data and then set a target RTG when we roll out the policy during training to collect more data~\cite{kumar2019reward} or during testing~\cite{chen2021decision, janner2021offline}. RCRL has gained popularity due to its conceptual simplicity, flexibility, and off-policy nature. Moreover, its usage allows us to leverage powerful neural architectures (e.g., Transformers~\cite{vaswani2017attention}) and large generative models (e.g., diffusion models~\cite{yang2022diffusion} or masked language models~\cite{ghazvininejad2019mask}), which have been a massive success in other parts of artificial intelligence, such as natural language processing~\cite{brown2020language, ouyang2022training} and computer vision~\cite{dosovitskiy2020image, ramesh2022hierarchical}. 

Vanilla RCRL paradigms treat RTGs as standard inputs in addition to the states and actions of the neural network. This design choice makes it easy to employ modern architectures such as Transformers~\cite{vaswani2017attention}. However, this practice, although plausible, has largely ignored the central challenge of RCRL. Training the RCRL model optimizes a policy to fit a dataset or data buffer. However, during policy rollout, we usually want to set a high target RTG in the hope that the model can achieve higher performance than the generating policy of the dataset. In other words, the central challenge of RCRL methods is to achieve generalization from low-return regions to high-return regions of the state space. One core generalization obstacle in vanilla RCRL is that the RTG inputs carry too little information. The model tends to treat different RTGs inputs as independent and unrelated tasks and then fails to find internal connections between trajectories with different RTG levels. In most cases, when we want to learn from sub-optimal datasets or data buffers, high total return trajectories are scarce. Thus, this problem makes learning and generalization in high-return regions extremely difficult. 

In this paper, we propose Bayesian Reparameterized RCRL (BR-RCRL). Our main intuition behind the design is that (1) in order to facilitate generalization from low-return regions to high-return regions, one needs to encode more inductive biases into the model, especially on how to deal with RTGs. (2) one needs to modify the rollout procedure of RCRL to filter out completely out-of-distribution inputs. BR-RCRL is a novel set of inductive biases to parameterize reward-conditioned policies inspired by Bayes' theorem. In BR-RCRL, we explicitly impose the prior knowledge that different RTGs are competitive with each other, not independent. From a causality perspective, BR-RCRL can also be viewed as a \textit{causal generative model}~\cite{peters2017elements} that respects the ground-truth causal relationships between random variables. This causal viewpoint provides a concrete explanation of why BR-RCRL generalizes better than vanilla RCRL models when distribution shifts occur.

In the experiment section, we show that BR-RCRL dramatically boosts the performance of many RCRL algorithms across commonly used offline RL benchmarks.

\section{Related Works and Preliminaries}

\subsection{Off-policy and Offline RL}
Reinforcement learning studies learning problems in the setting of a Markov Decision Process (MDP) described by the tuple ($\mathcal{S}$, $\mathcal{A}$, $P$, $\mathcal{R}$). This tuple consists of action $a \in \mathcal{A}$, state $s \in \mathcal{S}$, transition probability $P(s'|s, a)$, and reward function $r=\mathcal{R}(s, a)$. The goal of MDP is to maximize the expected return $\mathbb{E}[\sum_{t=0}^T \gamma^t r_t]$, where we denote $\gamma$ the discount factor and $a_t$, $s_t$, $r_t=\mathcal{R}(s_t, a_t)$ the action, state, and reward at timestep $t$, respectively.

In the on-policy RL setting~\cite{schulman2017proximal, schulman2015trust}, an agent interacts with the environment and updates its current policy using experiences gathered using the same current policy. In off-policy RL~\cite{mnih2013playing, lillicrap2015continuous}, the agent still interacts with the environment, but can update its current policy using experiences collected from any past policies as well.
The off-policy framework brings us two advantages:
(1) More sample-efficient training since the agent does not have to discard all previous transitions and can instead maintain a buffer where transitions can be reused multiple times.
(2) Better state space exploration since the sample collection follows a behavior policy different from the target policy.

Offline RL~\cite{prudencio2022survey, levine2020offline}, also known as Batch RL, moves one step further and becomes truly ``off-policy'' by learning only from static dataset $\{s_i,a_i,s'_i, r_i\}_{i=1}^N$ collected from arbitrary policies. 
This offline setting can be extremely valuable when an online interaction is impractical due to expensive or dangerous data collection such as robotics~\cite{singh2021reinforcement}, healthcare~\cite{liu2020reinforcement}, and autonomous driving~\cite{kiran2021deep}. 
However, the main challenge in offline RL is the distributional shift between the dataset and the environment. This challenge is either addressed by constraining the learned policy to the behavior policy used to collect the dataset~\cite{fujimoto2019off, kumar2019stabilizing, wu2019behavior} or estimating a conservative value function~\cite{kumar2020conservative, yu2021combo}.

\subsection{Reward-Conditioned RL}
A popular off-policy learning paradigm is Reward-Conditioned Reinforcement Learning, which has been studied across multiple contexts~\cite{janner2021offline, chen2021decision, kumar2019reward, emmons2021rvs, srivastava2019training, ajay2022conditional}.
We denote the data-generating behavior policy to be $\beta$, and then we define the random variable of total return (RTG) after taking action $a$ at state $s$ as
\begin{equation}
    Z^\beta(s, a)=\sum_{t\geq 0}\gamma^t r(a_t,s_t)|_{a_0=a, s_0=s}.
\end{equation}
RCRL tries to learn the RTG-conditioned policy $\bar{\beta}_\theta(a|R,s)$ parameterized by $\theta$ to match the ground-truth posterior policy $\beta[a|s, Z^\beta(s, a)=R]$, where $R$ is a target RTG. In vanilla RCRL settings, the RTGs are treated as an input variable and directly fed into a neural network, which could be an MLP~\cite{emmons2021rvs}, a Transformer~\cite{chen2021decision, janner2021offline}, or a diffusion model~\cite{ajay2022conditional}. 
Namely, the learned policy $\bar{\beta}_\theta(a|R,s)$ takes $R$ and current state $s$ and outputs a distribution of actions that matches the posterior distribution of data generating policy $\beta(a|s, R)$. 
More formally, given a data buffer $\{s_i,a_i,s'_i,R_i\}_{i=1}^N$, RCRL optimizes the following loss function:
\begin{equation}
    \mathcal{L}_{\text{RCRL}}(\theta) = -\sum_i \log \bar{\beta}_\theta(a_i|s_i, R_i)
\end{equation}
In an online RCRL setting~\cite{kumar2019reward}, one interleaves the policy fitting with data collection and dynamically expands the dataset with new data collected. 

\subsection{Energy-based Models}
Energy-based Models (EBMs)~\cite{lecun2006tutorial, song2021train}, also known as non-normalized probabilistic models, are flexible and can model expressive distributions since they do not have a restriction on the tractability of the normalizing constant. The density given by an EBM is $p_{\theta}(\bm{x})=\exp(-E_{\theta}(\bm{x}))/Z_{\theta}$, where energy $E_{\theta}(x)$ is a nonlinear function parameterized by $\theta$ and $Z_{\theta}=\int \exp(-E_{\theta}(\bm{x})) d\bm{x}$ is a constant w.r.t $\bm{x}$. 
Although the likelihood of EBMs cannot be directly maximized, three surrogate principles for learning EBMs are usually considered. Firstly, we can estimate the gradient of the log-likelihood with MCMC approaches~\cite{neal2011mcmc, welling2011bayesian}, which use the fact that the gradient of the log-probability w.r.t. $\bm{x}$ equals the gradient of the energy. 
Secondly, one can learn an EBM by matching the first derivatives of the density function and the data distribution~\cite{song2019generative, song2020score}. Finally, an EBM can be learned by contrasting it with another distribution with a known density using Noise Contrastive Estimation (NCE)~\cite{gutmann2010noise}. InfoNCE~\cite{oord2018representation}, inspired by NCE, uses categorical cross-entropy loss to identify the positive sample $\bm{x}$ amongst a set of unrelated noise samples $X'$.
\begin{equation}
    \mathcal{L}_{\text{infoNCE}} = -\mathbb{E}\left[\log\frac{f(\bm{x}, \bm{c})}{\sum_{\bm{x}'\in X'} f(\bm{x}', \bm{c})} \right]
\end{equation}
where $\bm{c}$ is the context indicating the label of $\bm{x}$ and the scoring function is $f(\bm{x}, \bm{c}) \propto \frac{p(\bm{x}|\bm{c})}{p(\bm{x})}$.

Recently, EBM formulation has been considered the policy representation~\cite{haarnoja2017reinforcement}. Existing works also use EBMs in a model-based planning framework~\cite{boney2020regularizing} or imitation learning~\cite{liu2020energy} with an on-policy algorithm.
Another trend to combine EBM and RL is utilizing an EBM as part of the RL framework~\cite{kostrikov2021offline, nachum2021provable}.

\section{Limitations of Vanilla RCRL}

The pipeline of vanilla RCRL has two fundamental limitations. Although the loss $\mathcal{L}_{\text{RCRL}}$ is a reasonable surrogate of $-\mathbb{E}_{\beta}[\log \bar{\beta}(a|s, R)]$, optimizing such a loss is equivalent to maximizing the average likelihood under the distribution induced by $\beta$. However, we aim to achieve a higher reward than we can get from $\beta$ during test time. The mismatch between the training and testing input distribution becomes more severe when we try to set a higher RTG for the model. In many cases, the input RTGs can become so high that they turn out to be out-of-distribution inputs (c.f. Figure ~\ref{fig:trajectory}).  Thus, we argue that a fundamental challenge for RCRL is to improve the model's generalization performance on higher RTG inputs while avoiding unpredictable behavior caused by OOD inputs. 

However, vanilla RCRL models lack appropriate inductive biases to facilitate such generalization. One core issue that makes the generalization to high RTG region extremely difficult is that the model could treat inputs with specific RTGs as independent, unrelated prediction problems and fail to discover the connections across trajectories with different RTGs (for which we term as \textbf{RTG Independence}). Since RTGs contain minimal information, in our experiments, we found that in contrast with what one may hope, many vanilla RCRL models~\cite{emmons2021rvs, chen2021decision} have a tendency to treat each different RTG input simply as one independent prediction task. This phenomenon prohibits the model from learning useful information in low RTG regions and tries to generalize to high RTG regions. (The generalization is notoriously hard because, in almost all RL tasks, low-reward trajectories look very different from high-reward ones.)

\begin{figure}[t]
    \centering
    \includegraphics[width=0.49\textwidth]{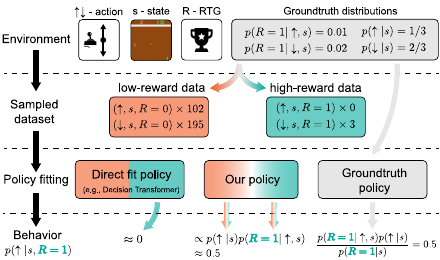}
    \caption{Illustration of the \textbf{Sampling Bias Dominance} problem. When conditioned on $R=1$, only 3 noisy samples are available; thus, a huge sampling variance is introduced. Our policy uses an amortized estimation of $p(a|s)$ and $p(R|a,s)$, so it is less affected by the high sampling variance. }
    \label{fig:example}
    \vspace{-5mm}
\end{figure}

In fact, these issues result in two serious problems for RCRL during both training and testing. The first problem is \textbf{Sampling Bias Dominance}. When conditioned on higher RTGs, the training samples that can attain these high RTGs become fewer. This means that empirical distribution in the dataset, when conditioned on a high RTG, can look very different from the ground-truth data-generating distribution because of the large sampling variance. The sampling variance, when combined with the RTG Independence problem, makes the generalization on high RTG regions notoriously difficult. Consider a simple example environment in Figure~\ref{fig:example}, where we have 2 possible RTG values for a given action. The RTG input tokens carry only one bit of information, and it does not provide any information or hints about how the target RTGs $R=1$ and $R=0$ are related to each other. The model has no choice but to view $p(a|R=1,s)$ and $p(a|R=0,s)$ as independent and unrelated prediction problems. The model has no idea about simple facts such as these two options $R=0$ and $R=1$ should be mutually exclusive. What makes things worse, high RTG samples are rare. In this case, $R=1$ consists of only 3 samples. Thus, the empirical distribution $p_e(\uparrow|R=1,s)=0$ based on the dataset is quite different from the ground-truth distribution $p(\uparrow|R=1,s)=0.5$. The prediction problem $f: S\rightarrow \Delta(\mathcal{A})$ where $f(s)=p(a|R=1,s)$ is thus too noisy to be trained on this dataset due to sampling bias. Therefore, Vanilla RCRL methods have difficulties in generalization to high reward regions because of such large sampling noises in the dataset. 

The second problem, which mainly occurs in testing time, is what we term as \textbf{Out-of-Distribution (OOD) Conditioning}. We provide a stair-climbing example in Figure~\ref{fig:trajectory} for illustration. During testing time, a typical practice is first to choose a high target RTG and then dynamically decrease the RTG according to immediate rewards from the environment. In this example, if the robot takes one bad move in $s_2$, then quickly the conditioning RTG would not be achievable from the next state $s_3$ (because the goal can never be achieved with 1 remaining step), which means the RTG condition turns into an OOD input to the policy model. The behavior of the trained neural network is undefined on such inputs, bringing in severe performance drops. 

\begin{figure}[t]
    \centering
    \includegraphics[width=0.48\textwidth]{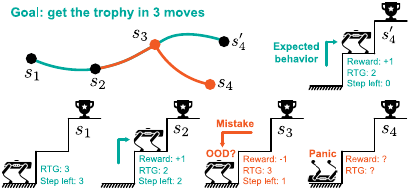}
    \caption{Illustration of the \textbf{OOD Conditioning} problem, where an agent accidentally takes a bad move and encounters an OOD, it then gets stuck there and is ``panic''. Our algorithm can dynamically select the RTG to maximize the return when such bad things happen.}
    \label{fig:trajectory}
\end{figure}

\section{Methodology}

\subsection{Bayesian Reparameterization of RCRL}

In our method, the goal is to learn a probabilistic model to better approximate the posterior policy $\beta[a|s, Z^\beta(s, a)=R]$. Instead of directly taking RTGs as inputs to the neural network, our observation is that one could encode more prior knowledge into the model. A core inductive bias we introduce is that different RTGs should be competitive, not independent of each other. In order to encode this competition between different RTGs into the model, we no longer feed RTGs into the model as an extra input variable. Instead, the RTG mechanism is replaced by an energy function defined by two amortized neural network estimations of $\beta(a|s)$ and $\beta(R|s, a)$.

More precisely, thanks to the Bayes formula, the posterior policy can be written as: 
\begin{equation}
    \beta[a|s,Z^\beta(s,a)=R]\propto \beta(a|s)\beta[Z^\beta(s,a)=R|s,a].
\end{equation}
Inspired by the Bayesian representation of the posterior policy, we define an energy-based model
\begin{equation}
      E_\theta(a|s, R) = -\log \bar{\beta}_\theta(a|s)-\log \bar{\beta}_\theta(R|s,a),
\end{equation}
in which $\bar{\beta}_{\theta}(\cdot|s)$ and $\bar{\beta}_{\theta}(\cdot|s,a)$ are represented as two parameterized neural networks. In this work, we further assume that $\bar{\beta}_{\theta}(a|s)$ is easy to sample from, and its likelihood can be computed exactly. Instead of directly fitting a conditional model like vanilla RCRL, we reparameterize RCRL with the policy defined by this energy function $\bar{\beta}_\theta(a|s, R) = \exp(-E_\theta(a|s, R))/Z$ as an approximation to real $\beta(a|s, R)$, where $Z_{\theta}=\sum_a \exp(-E_{\theta}(a|s, R))$. 

In vanilla RCRL, we optimize the following loss function:
\begin{equation}
    \mathcal{L}_0(\theta) = -\sum_i \log \bar{\beta}_\theta(a_i|s_i, R_i),
\label{equ:rcrl}
\end{equation}
where $\bar{\beta}_\theta(a|s, R)$ is parameterized as a normal neural network with input $s, R$ and output $a$. In our model, we still optimize the above loss function but with a Bayesian way of parameterization:
\begin{equation}
    \bar{\beta}_{\theta}(a|s,R) = \frac{\exp(-E_{\theta}(a|s,R))}{Z_{\theta}}=\frac{\bar{\beta}_{\theta}(a|s)\bar{\beta}_{\theta}(R|s,a)}{Z_{\theta}}
\end{equation}
In order to learn such a model, we need to calculate the normalizing constant $Z$. Here we use samples from $\bar{\beta}_{\theta}(a|s)$ to estimate it:
\begin{equation}
    Z_{\theta}=\sum_a \bar{\beta}_{\theta}(a|s)\bar{\beta}_{\theta}(R|s,a)=\mathbb{E}_{a\sim \bar{\beta}_{\theta}}[\bar{\beta}_{\theta}(R|s,a)].
\end{equation}

We then use InfoNCE~\cite{oord2018representation} loss to optimize objective (\ref{equ:rcrl}) after rewriting the model as
\begin{equation}
\bar{\beta}_{\theta}(a|s, R, A') = \frac{\bar{\beta}_{\theta}(a|s)\bar{\beta}_{\theta}(R|s,a)}{\sum_{a'\in A'}\bar{\beta}_{\theta}(R|s,a')},
\end{equation}
where negative samples $a' \in A'$. Thus, our loss function for RCRL can be summarized as
\begin{equation}
\begin{split}
      \mathcal{L}_0(\theta) 
    =& -\sum_i \left[ \log \bar{\beta}_{\theta}(a_i|s_i)+ \log \frac{\bar{\beta}_{\theta}(R_i|s_i,a_i)}{\sum_{a'_i\in A_i'}\bar{\beta}_{\theta}(R_i|s_i,a'_i)} \right],
\end{split}
\end{equation}
where $a_i'$ is sampled from $\bar{\beta}_{\theta}(a|s_i)$. In addition to the $\mathcal{L}_0(\theta)$ loss that has the same goal as vanilla RCRL, we add term $\mathcal{L}_1(\theta)$, which is the log-likelihood of the RTG:
\begin{equation}
    \mathcal{L}_1(\theta)=-\sum_i \log \bar{\beta}_\theta(R_i|s_i,a_i).
\end{equation}
This term is essential for our model since it addresses the RTG independence problem. After having this loss, the model is forced to capture the critical prior knowledge of how different RTG inputs depend on each other.  
Finally, our objective is to maximize the combination of the two losses with an adjustable parameter $\lambda$:
\begin{equation}
    \mathcal{L}_{\text{BR-RCRL}}(\theta) = \mathcal{L}_0(\theta)+\lambda \mathcal{L}_1(\theta).
\end{equation}

\subsection{Adaptive Inference}

In order to address the OOD Conditioning problem, we propose a novel adaptive inference method to ensure that our query is always in training distribution. After training $\bar{\beta}_{\theta}(a|s, R)$, we aim to deduce a new policy that can perform better than the data-generating policy $\beta$. To do so, we write $Z^{\bar{\beta}}(s)=\sum_{t\geq 0}\gamma^t r(a_t,s_t)|_{s_0=s}$ the expected total return under $\bar{\beta}$. For a given $\delta\in(0,1)$, we define a threshold function 
\begin{equation}
\theta_\delta(s)= \max_r \{r\in \mathbb{R}|P(Z^{\bar{\beta}}(s)\geq r)\geq \delta\}.
\end{equation}
Then, we define the new policy as 
\begin{equation}
    \pi^\delta(a|s)=\bar{\beta}(a|Z^{\bar{\beta}}(s,a)\geq \theta_\delta(s) ,s).
\end{equation}
During testing time, we sample from $\pi^\delta(a|s)$. Intuitively, this sampling method tries to dynamically adjust the RTG as high as possible while preserving feasibility.

By definition we have $\bar{\beta}_\theta(a|s,R) = \exp(-E_\theta(a|s,R))/Z$. Combing these formulas, we have the following proposition:
\begin{proposition}
   Define a new energy function :  $E^\delta_\theta(a|s) = -\log \bar{\beta}_\theta(a|s)-\log \bar{\beta}_\theta(R>\theta_\delta(s)|s,a)$, where 
   \begin{equation}
        \bar{\beta}_\theta(R>\theta_\delta(s)|s,a) =\sum_{r>\theta_\delta(s)}  \bar{\beta}_\theta(r|s,a) 
   \end{equation}
then we have  $\pi^\delta(a|s)\propto E^\delta_\theta(a|s)$. Namely, $\pi^\delta(a|s)$ is the Boltzmann distribution of energy function $E^\delta_\theta(a|s)$. 
\label{the:1}
\end{proposition}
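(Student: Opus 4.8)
The plan is to recognize that $\pi^\delta(a|s)$ is, by its very definition, the posterior obtained by conditioning the learned model on the \emph{event} $\{Z^{\bar\beta}(s,a)\geq \theta_\delta(s)\}$ rather than on a point value $\{Z^{\bar\beta}(s,a)=R\}$, and then to reduce it to the already-established Bayesian factorization by marginalizing over the RTG. The computation reuses exactly the factorization baked into the training energy, namely the chain rule $\bar\beta_\theta(a,R|s)=\bar\beta_\theta(a|s)\,\bar\beta_\theta(R|s,a)$, so the proof is essentially Bayes' theorem applied to a half-line event.

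Concretely, writing $\tau=\theta_\delta(s)$, I would start from the definition and expand the conditional as a ratio:
\[
\pi^\delta(a|s)=\bar\beta_\theta\!\left(a \,\middle|\, Z^{\bar\beta}(s,a)\geq \tau,\, s\right)
=\frac{\bar\beta_\theta\!\left(a,\, Z^{\bar\beta}(s,a)\geq \tau \,\middle|\, s\right)}{\bar\beta_\theta\!\left(Z^{\bar\beta}(s,a)\geq \tau \,\middle|\, s\right)}.
\]
The numerator factors through the chain rule followed by a marginalization over the RTG values above the threshold,
\[
\bar\beta_\theta\!\left(a,\, Z^{\bar\beta}(s,a)\geq \tau \,\middle|\, s\right)
=\bar\beta_\theta(a|s)\sum_{r>\tau}\bar\beta_\theta(r|s,a)
=\bar\beta_\theta(a|s)\,\bar\beta_\theta(R>\tau|s,a),
\]
which is precisely $\exp(-E^\delta_\theta(a|s))$ by the definition of the new energy function.

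The final step is to observe that the denominator $\bar\beta_\theta(Z^{\bar\beta}(s,a)\geq\tau|s)=\sum_{a'}\bar\beta_\theta(a'|s)\,\bar\beta_\theta(R>\tau|s,a')$ is a sum over $a'$ and therefore does not depend on $a$; it is exactly the partition function of the Boltzmann distribution attached to $E^\delta_\theta$. Hence $\pi^\delta(a|s)\propto \exp(-E^\delta_\theta(a|s))$, i.e.\ $\pi^\delta$ is the Boltzmann distribution of $E^\delta_\theta$, which is the claim (the statement's ``$\propto E^\delta_\theta$'' being shorthand for the Boltzmann form $\propto\exp(-E^\delta_\theta)$ spelled out in the following sentence).

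I anticipate that the only delicate point is bookkeeping about the conditioning event, not any hard analysis. I must treat $\{Z^{\bar\beta}(s,a)\geq\tau\}$ as an event and justify the marginalization $\bar\beta_\theta(R>\tau|s,a)=\sum_{r>\tau}\bar\beta_\theta(r|s,a)$ consistently with the discrete RTG parameterization used by the model, and reconcile the ``$\geq$'' appearing in the definitions of $\theta_\delta$ and $\pi^\delta$ with the strict ``$>$'' used in $E^\delta_\theta$ --- a tie-breaking convention that is immaterial when the RTG law places no atom exactly at $\tau$ and is otherwise absorbed into the definition. Once the event-conditioning is set up correctly, the $a$-independence of the normalizer and the identification with the Boltzmann form are immediate, so the substantive content is entirely the single Bayes/marginalization identity.
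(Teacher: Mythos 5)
Your proof is correct and is exactly the argument the paper intends: the paper's own proof is the single sentence ``it directly follows from rewriting $\bar{\beta}(a|Z^{\bar{\beta}_{\theta}}(s,a)\geq \theta_\delta(s),s)$ with Bayes formula,'' and your write-up is just that Bayes/marginalization step spelled out, with the numerator identified as $\exp(-E^\delta_\theta(a|s))$ and the $a$-independent denominator as the partition function. Your added care about the event conditioning and the $\geq$ versus $>$ tie-breaking convention is a detail the paper glosses over, not a different approach.
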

\begin{proof}
It directly follows from rewriting $\bar{\beta}(a|Z^{\bar{\beta}_{\theta}}(s,a)\geq \theta_\delta(s) ,s)$ with Bayes formula.
\end{proof}

In our implementation, $\theta_\delta(s)$ is not exactly computable, so we approximate it using samples. We first sample a batch of actions $\{a_i\}_{i=1}^N\sim \bar{\beta}_{\theta}(a|s)$, then use these samples to compute an estimate of the distribution $\bar{\beta}_{\theta}(R|s)=\frac{1}{N}\sum_i[\bar{\beta}_{\theta}(R|s,a_i)]$. Then we use the threshold $\delta$ to select the threshold $\theta_\delta$ such that 
\begin{equation}
\theta_\delta(s)= \max_r \{r\in \mathbb{R}|P(R\geq r|s)\geq \delta\}
\end{equation}
In summary, we illustrate the proposed adaptive inference procedure in Algorithm~\ref{alg:inference}.

\begin{algorithm}[tb]
   \caption{Adaptive Inference for BR-RCRL}
   \label{alg:inference}
\begin{algorithmic}[1]
   \STATE {\bfseries Input:} threshold $\delta>0$, trained policy network $\bar{\beta}_\theta$
   \STATE Initialize $s=s_0$.
   \WHILE{$s$ is not a terminal state}
   \STATE Sample a batch of actions $\{a_i\}_{i=1}^N\sim \bar{\beta}_{\theta}(a|s)$. 
   \STATE Compute $\bar{\beta}_{\theta}(R|s)=\frac{1}{N}\sum[\bar{\beta}_{\theta}(R|s,a_i)]$
   \STATE Compute $\theta_\delta(s)= \max_r \{r\in \mathbb{R}|P(R\geq r|s)\geq \delta\}$
   \STATE Perform iterative inference on energy function $E^\delta_\theta(a|s)$ and return the best action $a'$
   \STATE Execute $a'$ in the environment and get $s'$
   \STATE Update $s' \rightarrow s$
   \ENDWHILE
\end{algorithmic}
\end{algorithm}

\subsection{Discrete Action}
In the discrete action case, our model can be considerably simplified. 
Following~\cite{bellemare2017distributional}, we model the distribution of RTG using a discrete distribution parametrized by $V_{\text{min}} \in \mathbb{R}$, $V_{\text{max}} \in \mathbb{R} $, and $N \in \mathbb{N}$.
Specifically, we use a set of bucket $B$:
\begin{equation}
    \left\{b_i=V_{\text{min}} + i\Delta b: i\in [0, N) \right\}, \Delta b = \frac{V_{\text{max}}-V_{\text{min}}}{N-1},
\end{equation}
to represent the discretized RTG.
In this case, we can train a joint probabilistic model $\bar{\beta}_{\theta}(a, R|s)$, which is parameterized as $f: S\rightarrow \Delta(\mathcal{A} \times \mathbb{R})$. The set of joint distributions $\Delta(\mathcal{A}\times \mathbb{R})$ is represented using a $|A||B|$-way Softmax function~\cite{bridle1989training}, where $|A|$ is the number of actions and $|B|$ is the number of reward buckets.

In this parameterization, the $\mathcal{L}_1(\theta)$ loss stays the same but the InfoNCE~\cite{oord2018representation} loss can be replaced by a normal Softmax loss $\mathcal{L}_0(\theta) = -\sum_i \log \bar{\beta}_\theta(a_i|s_i, R_i)$ because now the conditional distribution $\bar{\beta}_{\theta}(a|s, R)$ can be easily computed.

\begin{figure}[t]
    \centering
    \includegraphics[width=0.4\textwidth]{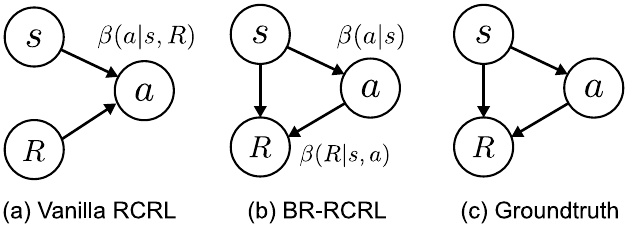}
    \caption{Generative models of Vanilla RCRL (a), BR-RCRL (b), and the ground-truth causal graphical model of RTG generation (c).}
    \label{fig:causal_pgm}
\end{figure}

\subsection{Causal Perspective of BR-RCRL}
We provide another explanation of why BR-RCRL can facilitate better generalization over vanilla RCRL. Consider the ground-truth causality relationships between three random variables $(s, a, R)$ generated by the data-generating policy. It is obvious that $s$ is a direct cause of $a$ since $a$ is generated by the behavior policy $\beta$. It is also apparent that both $s$ and $a$ are immediate causes of the total return $R$, leading to the true causal graph as shown in Figure~\ref{fig:causal_pgm}(c). BR-RCRL tries to fit two neural networks $\bar{\beta}_\theta(a|s)$ and $\bar{\beta}_\theta(R|s, a)$ that are aligned with the true causal model. It is well-known that generative models that respect the causality relationships are more robust to distribution shifts because they can avoid learning spurious relationships between random variables~\cite{arjovsky2019invariant, scholkopf2021toward, lu2021invariant, ding2022generalizing}. This explains why our BR-RCRL generalizes much better than vanilla RCRL when the distribution shifts because of the user-specified value of $R$, which can be viewed as an intervention~\cite{eberhardt2007interventions} from a causality point of view. 

\subsection{Theoretical Analysis}

Now we provide an analysis of the proposed algorithm. 
In the following, we sometimes write $\pi=\pi^\delta$ for short if there is no confusion.
Theoretically, we expect our new policy $\pi^\delta$ satisfy two important properties: 
\begin{itemize}[leftmargin=0.3in]
    \vspace{-2mm}
    \item[(1)] Its trajectory distribution should not diverge from $\beta$, which means the model should not take OOD actions. This can be quantitatively measured by KL divergence $\text{KL}[\pi^\delta(\mathcal{T})||\beta(\mathcal{T})]$. If the KL divergence is bounded, each sample trajectory in $\pi^\delta$ has a positive probability in $\beta$, thus eliminating the OOD problem.
    \vspace{-3mm}
    \item[(2)] Policy $\pi^\delta$ should be guaranteed to have better performance than $\beta$. Otherwise, one can do behavior cloning and ignore the reward information. 
    \vspace{-2mm}
\end{itemize}
In summary, we have the following theorem to justify our algorithm. The proof can be found in Appendix~\ref{app:proof}.
\begin{theorem}
    Let $\delta\in (0,1)$, $\beta$ be the data generating policy and $\pi^\delta$ be the conditional policy $\pi^\delta(a|s)=\beta(a|Z(s,a)\geq \theta_\delta(s) ,s)$. Then we have $\text{KL}[\pi^\delta(\mathcal{T})||\beta(\mathcal{T})] \leq -N\log \delta$. On the other hand, we have $V^{\beta}(s)\leq V^{\pi}(s), \forall s\in S$. $V^\beta(s) = \mathbb{E}_{a\sim \beta,\beta}[Z^\beta(s,a)]$, and $V^\pi(s) = \mathbb{E}_{a\sim \pi,\pi}[Z^\pi(s,a)]$.
\label{the:2}
\end{theorem}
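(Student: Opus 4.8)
The plan is to prove the two claims by separate mechanisms, since the KL bound is a pure change-of-measure estimate while the value improvement is a policy-improvement argument.

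For the trajectory KL bound I would first factorize both distributions over the horizon. Writing $\mathcal{T}=(s_0,a_0,\dots,s_{N-1},a_{N-1})$, both $\beta(\mathcal{T})$ and $\pi^\delta(\mathcal{T})$ share the same initial-state law and the same transition kernel $P(s'|s,a)$, so those factors cancel in the log-ratio and
\begin{equation}
\mathrm{KL}[\pi^\delta(\mathcal{T})\,\|\,\beta(\mathcal{T})] = \mathbb{E}_{\pi^\delta}\Big[\textstyle\sum_{t=0}^{N-1}\log\frac{\pi^\delta(a_t|s_t)}{\beta(a_t|s_t)}\Big].
\end{equation}
The crucial step is a pointwise bound on each per-step ratio. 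Using the Bayes rewriting behind Proposition~\ref{the:1}, $\pi^\delta(a|s)=\beta(a|s)\,P(Z^\beta(s,a)\ge\theta_\delta(s)|s,a)/P(Z^\beta(s)\ge\theta_\delta(s)|s)$, so the ratio equals $P(Z^\beta(s,a)\ge\theta_\delta(s)|s,a)/P(Z^\beta(s)\ge\theta_\delta(s)|s)$. The numerator is at most $1$, and by the definition $\theta_\delta(s)=\max_r\{r:P(Z^\beta(s)\ge r)\ge\delta\}$ the denominator is at least $\delta$. Hence each term is at most $-\log\delta$, and summing over the $N$ steps gives $\mathrm{KL}\le -N\log\delta$. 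This part is clean; the only bookkeeping to watch is that $N$ is the trajectory length.

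For the value improvement I would invoke the policy improvement theorem (equivalently the performance-difference lemma), which reduces the global claim $V^\beta(s)\le V^{\pi^\delta}(s)$ to the one-step condition $\mathbb{E}_{a\sim\pi^\delta(\cdot|s)}[Q^\beta(s,a)]\ge V^\beta(s)$ at every $s$, with $Q^\beta(s,a)=\mathbb{E}[Z^\beta(s,a)|s,a]$; telescoping the advantages $A^\beta(s_t,a_t)=Q^\beta(s_t,a_t)-V^\beta(s_t)$ along $\pi^\delta$-trajectories then propagates a nonnegative per-step improvement into the full value function. Writing $g(s,a)=P(Z^\beta(s,a)\ge\theta_\delta(s)|s,a)$ for the reweighting factor so that $\pi^\delta(a|s)\propto\beta(a|s)g(s,a)$, the one-step gap is exactly
\begin{equation}
\mathbb{E}_{a\sim\pi^\delta}[Q^\beta(s,a)]-V^\beta(s) = \frac{\mathrm{Cov}_{a\sim\beta}\big(g(s,a),\,Q^\beta(s,a)\big)}{\mathbb{E}_{a\sim\beta}[g(s,a)]},
\end{equation}
so the entire improvement hinges on this covariance being nonnegative.

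The main obstacle is establishing that covariance. I would argue that actions with a higher probability of clearing the threshold $\theta_\delta(s)$ also tend to have higher expected return, i.e. $g(s,a)$ and $Q^\beta(s,a)$ are positively associated under $\beta(\cdot|s)$. This is the genuine difficulty of the second claim and the place where any structural hypothesis on the return distributions (for instance a stochastic-dominance or monotone-likelihood condition across actions) must enter, since without it a large survival probability can reflect a merely high-\emph{variance} rather than high-\emph{mean} action and the covariance could turn negative. Granting this monotone association, the covariance is nonnegative, the one-step improvement holds uniformly in $s$, and the policy-improvement iteration delivers $V^\beta(s)\le V^{\pi^\delta}(s)$ for all $s$. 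I would therefore spend most of the effort isolating and justifying this association property, treating the KL estimate as the routine half of the argument.
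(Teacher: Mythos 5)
Your proof of the KL bound is, step for step, the paper's own argument: cancel the shared initial-state law and transition kernel in the log-ratio, rewrite each per-step ratio via Bayes as $\beta(Z\geq\theta_\delta\mid a_i,s_i)/\beta(Z\geq\theta_\delta\mid s_i)$, bound the numerator by $1$ and the denominator below by $\delta$ using the definition of $\theta_\delta$, and sum the $N$ terms. That half is correct and complete.

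On the value-improvement claim, the obstacle you isolated is a genuine gap --- but it is a gap in the theorem and in the paper's proof, not merely in your attempt. The paper proves the elementary lemma $\mathbb{E}[X\mid X\geq c]\geq\mathbb{E}[X]$ and then writes the chain
\begin{equation*}
V^{\pi}(s)=\mathbb{E}_{a\sim\pi,\pi}[Z^{\pi}(s,a)]\;\geq\;\mathbb{E}_{a\sim\beta,\beta}\bigl[Z^{\beta}(s,a)\,\big|\,Z^{\beta}(s,a)\geq\theta_\delta\bigr]\;\geq\;V^{\beta}(s).
\end{equation*}
The second inequality is the lemma; the first is asserted with no justification whatsoever, and it is false in general, for exactly the reason you gave: the event $\{Z^{\beta}\geq\theta_\delta\}$ conditions on the environment's luck as well as on the agent's actions, whereas rolling out $\pi^\delta$ reweights only the actions. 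Concretely: from $s$ let $\beta$ be uniform over two actions, where $a_1$ yields return $10$ deterministically and $a_2$ triggers a purely environment-driven lottery paying $12$ or $0$ with probability $1/2$ each. With $\delta=1/4$ one gets $\theta_\delta(s)=12$, so $\pi^\delta$ puts all mass on $a_2$, giving $V^{\pi^\delta}(s)=6<8=V^{\beta}(s)$ even though $\mathbb{E}_\beta[Z\mid Z\geq 12]=12$; in your notation, $\mathrm{Cov}_{a\sim\beta}\bigl(g(s,a),Q^{\beta}(s,a)\bigr)=-1/2<0$. So the covariance condition you declined to assert for free indeed cannot be asserted: the second half of Theorem~\ref{the:2} fails at the stated level of generality and needs an extra hypothesis, e.g.\ deterministic transitions (so that all randomness in $Z^{\beta}(s,a)$ is attributable to the policy's own future actions, which $\pi^\delta$ can ``re-choose'' downstream) or the monotone-association condition you propose. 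One caution about your chosen route, though: even granting determinism, the reduction via the performance-difference lemma is lossy --- in the deterministic version of the example above the one-step condition $\mathbb{E}_{a\sim\pi^\delta}[Q^{\beta}(s,a)]\geq V^{\beta}(s)$ still fails while the conclusion holds, because the gain comes from $\pi^\delta$ re-conditioning at later states rather than from one-step dominance with respect to $Q^{\beta}$. A correct repair must therefore be an induction that tracks the conditioning along the trajectory, not the performance-difference lemma plus a pointwise covariance bound.
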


\begin{table*}[t]
\centering
\caption{Normalized Scores on Gym-MuJoCo tasks. The results of our method are averaged over 5 random seeds.}
\small{
  \begin{tabular}{l|l|C{1.0cm}C{0.6cm}C{0.6cm}C{0.6cm}C{0.6cm}C{0.5cm}C{0.7cm}C{0.5cm}C{0.8cm}C{0.6cm}C{0.6cm}C{0.7cm}}
    \toprule    
    \textbf{Dataset} & \textbf{Environment} & \textbf{Ours} & \textbf{DD} & \textbf{TT} & \textbf{DT} & \textbf{RvS} & \textbf{BC} & \textbf{\scriptsize{10\%}}\textbf{BC} & \textbf{IBC} & \textbf{TD3\scriptsize{+BC}} & \textbf{IQL} & \textbf{CQL} & \textbf{BEAR} \\
    \midrule
    Med-Expert & HalfCheetah &  \textbf{95.2}\scriptsize{$\pm$0.8} &  90.6 &  {95.0} &  86.8 &  92.2 &  55.2 &  92.9 &  34.8 &  90.7 &  86.7 &  91.6 &  53.4 \\
    Med-Expert & Hopper &  \textbf{112.9}\scriptsize{$\pm$0.9} &  {111.8} &  {110.0} &  107.6 &  101.7 &  52.5 &  {110.9} &  27.5 &  98.0 &  91.5 &  105.4 &  96.3 \\
    Med-Expert & Walker2d &  \textbf{111.0}\scriptsize{$\pm$0.4} &  108.8 &  101.9 &  108.1 &  106.0 &  107.5 &  109.0 &  16.2 &  {110.1} &  109.6 &  108.8 &  40.1 \\
    \midrule
    Medium & HalfCheetah &  {48.6}\scriptsize{$\pm$1.1} &  \textbf{49.1} &  46.9 &  42.6 &  41.6 &  42.6 &  42.5 &  35.2 &  48.3 &  47.4 &  44.0 &  41.7 \\
    Medium & Hopper &  {78.0}\scriptsize{$\pm$1.3} &  \textbf{79.3} &  61.1 &  67.6 &  60.2 &  52.9 &  56.9 &  75.3 &  59.3 &  66.3 &  58.5 &  52.1 \\
    Medium & Walker2d &  {82.3}\scriptsize{$\pm$1.7} &  {82.5} &  79.0 &  74.0 &  71.7 &  75.3 &  75.0 &  14.7 &  \textbf{83.7} &  78.3 &  72.5 &  59.1 \\
    \midrule
    Med-Replay & HalfCheetah &  42.3\scriptsize{$\pm$3.3} &  39.3 &  41.9 &  36.6 &  38.0 &  36.6 &  40.6 &  24.5 &  {44.6}&  {44.2} &  \textbf{45.5} &  38.6 \\
    Med-Replay & Hopper &  {98.3}\scriptsize{$\pm$2.6} &  \textbf{100} &  91.5 &  82.7 &  73.5 &  18.1 &  75.9 &  12.4 &  60.9 &  94.7 &  95.0 &  33.7 \\
    Med-Replay & Walker2d &  {80.6}\scriptsize{$\pm$2.5} &  75 &  \textbf{82.6} &  66.6 &  60.0 &  26.0 &  62.5 &  9.4 &  {81.8} &  73.9 &  77.2 &  19.2 \\
    \midrule
    \multicolumn{2}{c|}{\textbf{Average Score}}  & \textbf{83.2} & 81.8 & 78.9 & 74.7 & 71.7 & 51.9 & 74.0 & 27.8 & 75.3 & 77.0 & 77.6 & 48.2 \\
    \bottomrule
  \end{tabular}
}
\label{tab:d4rl-results}
\end{table*}


\section{Experiment}

In this section, we conduct several experiments on two standard benchmarks to answer the following questions:
\begin{itemize}[leftmargin=0.2in]
    \vspace{-3mm}
    \item \qa{\textbf{Q1:}} How is the performance of our proposed method compared to existing offline RL methods?
    \vspace{-3mm}
    \item \qa{\textbf{Q2:}} How do different target RTG strategies during inference influence the results?
    \vspace{-3mm}
    \item \qa{\textbf{Q3:}} How does the observed RTG match the target RTG during the inference stage?
    \vspace{-3mm}
    \item \qa{\textbf{Q4:}} How do different components in BR-RCRL influence the performance?
    \vspace{-3mm}
\end{itemize}
We first briefly introduce the datasets and settings used in the experiment, then provide answers to the above questions and additional analyses.

\subsection{Benchmarks and Datasets}

We evaluate our method in 9 Gym-MuJoCo tasks~\cite{fu2020d4rl} and 4 Atari games~\cite{mnih2013playing}, which are both standard offline RL benchmarks and cover continuous and discrete action spaces.
Results of baselines are obtained from the original papers except for DT on Atari because the reported score is obtained with the 1\% buffer dataset.

Datasets of the Gym-MuJoCo tasks are collected in locomotion environments (HalfCheetah, Hopper, and Walker2D) with three different data buffers.
{\fontfamily{pcr}\selectfont Medium} is generated by first training an online Soft Actor-Critic (SAC)~\cite{haarnoja2018soft} model, early-stopping the training, and collecting 1 million samples from this partially-trained policy.
{\fontfamily{pcr}\selectfont Medium-Replay} consists of recording all samples in the replay buffer observed during training until the policy reaches the ``medium'' level of performance. 
{\fontfamily{pcr}\selectfont Medium-Expert} mixes one million expert demonstrations and one million suboptimal data generated by a partially trained policy or by unrolling a uniform-at-random policy.
The results are normalized to ensure that the well-trained SAC model has a 100 score and the random policy has a 0 score.

The Atari benchmark is more difficult due to the high-dimensional state space and the long-horizon delayed reward. The offline dataset of this benchmark is collected from the replay buffer of an online DQN agent~\cite{mnih2015human}. The entire dataset has 50 million transitions, but we follow the setting in~\cite{kumar2020conservative} and use 10\% of the buffer (5 million transitions). Following~\cite{hafner2020mastering}, we report the normalized score where the random policy is 0 and the human performance is 100.

\subsection{Overall Performance (\qa{Q1})}

The overall performance in Gym-MuJoCo and Atari benchmarks is reported in Table~\ref{tab:d4rl-results} and Table~\ref{tab:atari-results} with the comparison of three types of baselines:
\begin{itemize}[leftmargin=0.2in]
    \vspace{-2mm}
    \item \textbf{Offline TD learning} adds constraints to online RL methods that use TD error, leading to a pessimistic behavior policy or a conservative value function. In this paper, we compare with QR-DQN~\cite{dabney2018distributional}, REM~\cite{agarwal2020optimistic}, IQL~\cite{kostrikov2021offline}, CQL~\cite{kumar2020conservative}, and BEAR~\cite{kumar2019stabilizing}.
    \vspace{-2mm}
    \item \textbf{Reward-conditioned RL} takes state and RTG (or reward) as input and predicts actions for the next step. We consider four representative works in the experiment: Trajectory Transformer (TT)~\cite{janner2021offline}, Decision Transformer (DT)~\cite{chen2021decision}, Decision Diffuser (DD)~\cite{ajay2022conditional}, and RvS~\cite{emmons2021rvs}.
    \vspace{-2mm}
    \item \textbf{Imitation learning} uses supervised learning to train policy, which mimics the state-action pairs in the dataset and usually ignores the reward information. We consider Behavior Cloning (BC)~\cite{pomerleau1988alvinn}, BC with top 10\% data (10\%BC), Implicit BC (IBC)~\cite{florence2022implicit}, and TD3+BC~\cite{fujimoto2021minimalist} as our baselines.
    \vspace{-2mm}
\end{itemize}

According to Table~\ref{tab:d4rl-results}, our method outperforms all baselines in terms of the average score. We achieve the highest score in 3 out of 9 Gym-MuJoCo tasks and are very close to the highest score (within standard derivation) in the remaining tasks. 
Unsurprisingly, our method works well in sub-optimal datasets (i.e., {\fontfamily{pcr}\selectfont Medium} and {\fontfamily{pcr}\selectfont Medium-Replay}) since the Bayesian reparametrization generalizes well even though only trained with low-reward data.
Compared to strong architectures, i.e., Transformers (DT and TT) and diffusion models (DD), our method still achieves improvement in most dataset settings, demonstrating the advantages of generalizability.

In the Atari benchmark, as shown in Table~\ref{tab:atari-results}, our method has the highest score in 3 out of 4 games and achieves a significant improvement in the {\fontfamily{pcr}\selectfont Breakout} game over other methods. 
We find that all methods have poor performance in the {\fontfamily{pcr}\selectfont Seaquest} game compared to human players (score of 100).
The potential explanation for the low reward in this game might be the complex rules of the game and the low quality of the 10\% dataset, both of which make the model only see the low-reward region.

\subsection{Different Target RTG Strategies (\qa{Q2})}

The second problem we want to investigate is the influence of target RTG during the inference stage. The results are reported in Table~\ref{tab:target-rtg}.
The easiest way to set the target RTG is using a fixed value, e.g., the max value of RTG in the dataset (named Max in Table~\ref{tab:target-rtg}).
This may cause a severe mismatch because lots of states correspond to low RTG. Therefore, DT sets an initial target RTG with the max value and gradually reduces it by subtracting the observed reward (named DT-Scheduler in Table~\ref{tab:target-rtg}). However, this scheduler cannot avoid the OOD problem, where the RTG is unreachable since $p(R|s) = 0$. As shown in the results, our method achieves better performance than using both max value and DT scheduler. The reason is that we select the target RTG according to the distribution $p(R|s)$, which generalizes well to different states $s$.

To further explore the selection of target RTG, we conduct an ablation study of the critical threshold $\delta$. We plot the results in Figure~\ref{fig:delta-influence} with Walker2D and Breakout environments. 
We can see that reducing the value of $\delta$ consistently improves the performance, which is in line with our design that a small $\delta$ corresponds to a high target RTG. region. However, setting $\lambda$ to a too-small value still has the risk of causing the OOD conditioning problem.

\begin{table}[t]
\centering
\caption{Normalized Score on Atari with the 10\% dataset. The results of our method are averaged over 5 random seeds.}
\small{
  \begin{tabular}{l|C{1.4cm}C{1.1cm}C{1.0cm}C{1.0cm}}
    \toprule
    \textbf{Method} & \textbf{Breakout}  & \textbf{Q*bert} & \textbf{Pong} & \textbf{Seaquest } \\
    \midrule
    BC        &  136.5           &  38.3            &  1.9            &  1.6   \\
    QR-DQN    &  496.7           & 52.1             &  119.3          & 14.5   \\
    REM       &  282.4           &  63.7            &  98.7           &  \textbf{19.2}   \\
    CQL       &  889.0           &  103.0           &  130.7          &  17.9   \\
    DT        &  293.6           &  60.1            &  113.0          &  7.2  \\
    \midrule
    Ours      &  \textbf{1239.2}\scriptsize{$\pm$104.2} &  \textbf{117.4}\scriptsize{$\pm$13.5}  &  \textbf{138.0}\scriptsize{$\pm$2.2} &  7.1\scriptsize{$\pm$3.1}   \\
    \bottomrule
  \end{tabular}
}
\vspace{-2mm}
\label{tab:atari-results}
\end{table}

\begin{table}[t]
\centering
\caption{Comparison between different inference strategies in Walker2D task.}
\small{
  \begin{tabular}{l|ccc}
    \toprule
    \textbf{Target RTG} & \textbf{Med-Reply}  & \textbf{Medium} & \textbf{Med-Expert} \\
    \midrule
    Max                  &  69.4  &  77.3  &  110.2  \\
    DT-Scheduler         &  72.2  &  78.1  &  109.6  \\
    Ours ($\delta=0.1$)  &  80.6  &  82.3  &  111.0  \\
    \bottomrule
  \end{tabular}
}
\vspace{-2mm}
\label{tab:target-rtg}
\end{table}

\subsection{Target RTG v.s. Observed RTG (\qa{Q3})}

After analyzing the design choice of the target RTG, we now look at the relationship between the target RTG and the observed RTG, which further reveals the behavior of our method.
The results in Figure~\ref{fig:target-observed} indicate that the observed RTG can generally match the same value of the target RTG.
We find that target RTG usually starts from a medium value ($250\sim 300$) rather than a high value used in DT. The reason is that the robot has not begun to move at the beginning, thus leading to a medium RTG.
As timestep increases, the target RTG increases to the high RTG region, meaning that the robot reaches the states corresponding to high RTG in the dataset.

At the end of trajectories, there are some mismatch cases where the observed RTG is lower or higher than the target RTG. 
One explanation for the case {\fontfamily{pcr}\selectfont observed RTG} $>$ {\fontfamily{pcr}\selectfont target RTG} is that the model $p(R|s)$ underestimates the value of RTG due to the sub-optimality of the {\fontfamily{pcr}\selectfont Medium-Replay} dataset. 
In contrast, {\fontfamily{pcr}\selectfont observed RTG} $<$ {\fontfamily{pcr}\selectfont target RTG} is usually caused by the maximum episode length, which compulsorily terminates a good state that should have had a high RTG.
This phenomenon happens when the quality of the dataset is high, for example, the Expert dataset in the left-top corner of Figure~\ref{fig:target-observed}.

\begin{figure}
    \centering
    \includegraphics[width=0.5\textwidth]{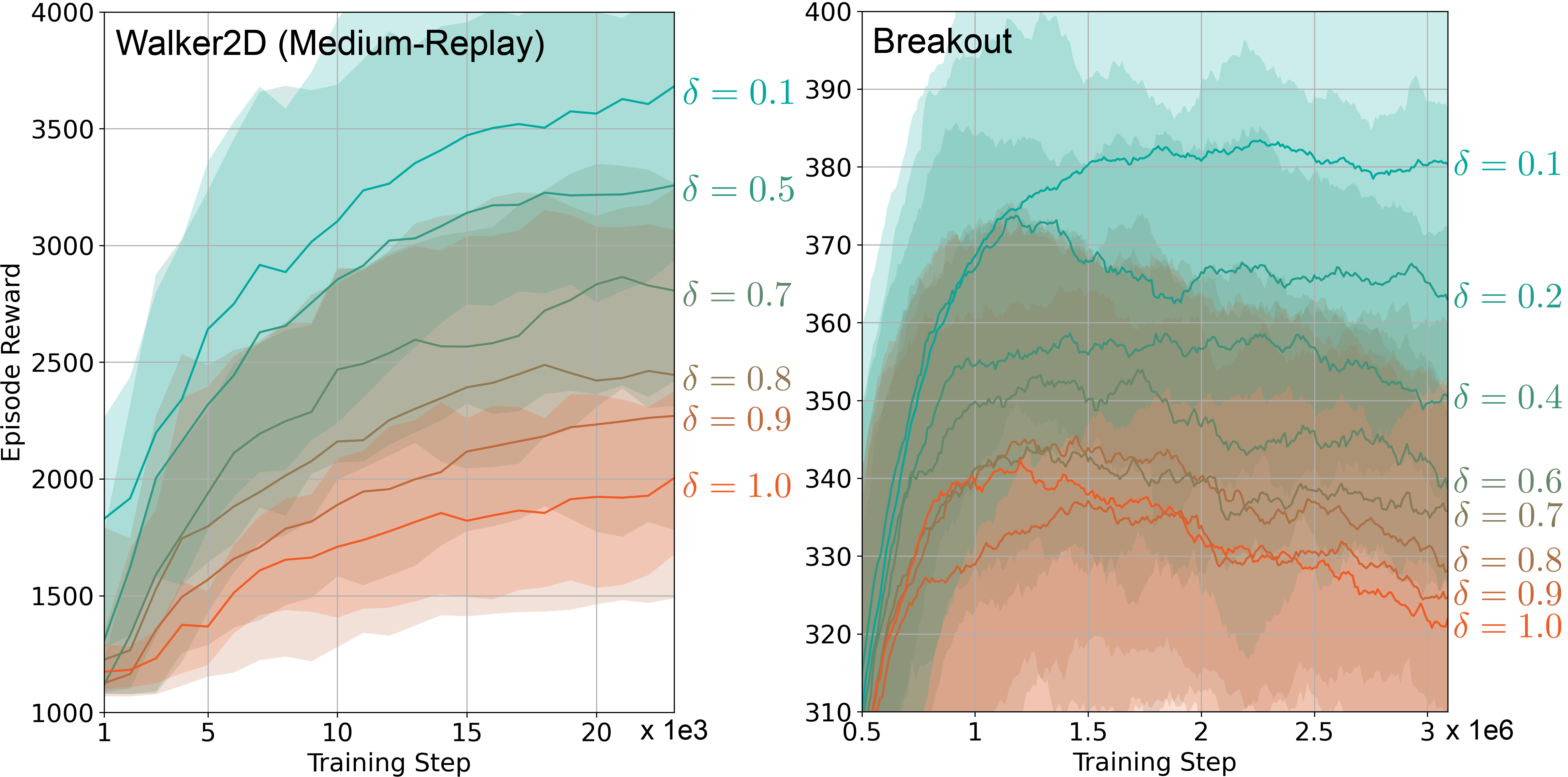}
    \caption{Raw episode reward of different values of $\delta$ during the inference stage in the Walker2D ({\fontfamily{pcr}\selectfont Medium-Replay}) task and the Breakout game. As the value of $\delta$ decreases, the performance improves.}
    \label{fig:delta-influence}
    \vspace{-3mm}
\end{figure}

\begin{figure}
    \centering
    \includegraphics[width=0.48\textwidth]{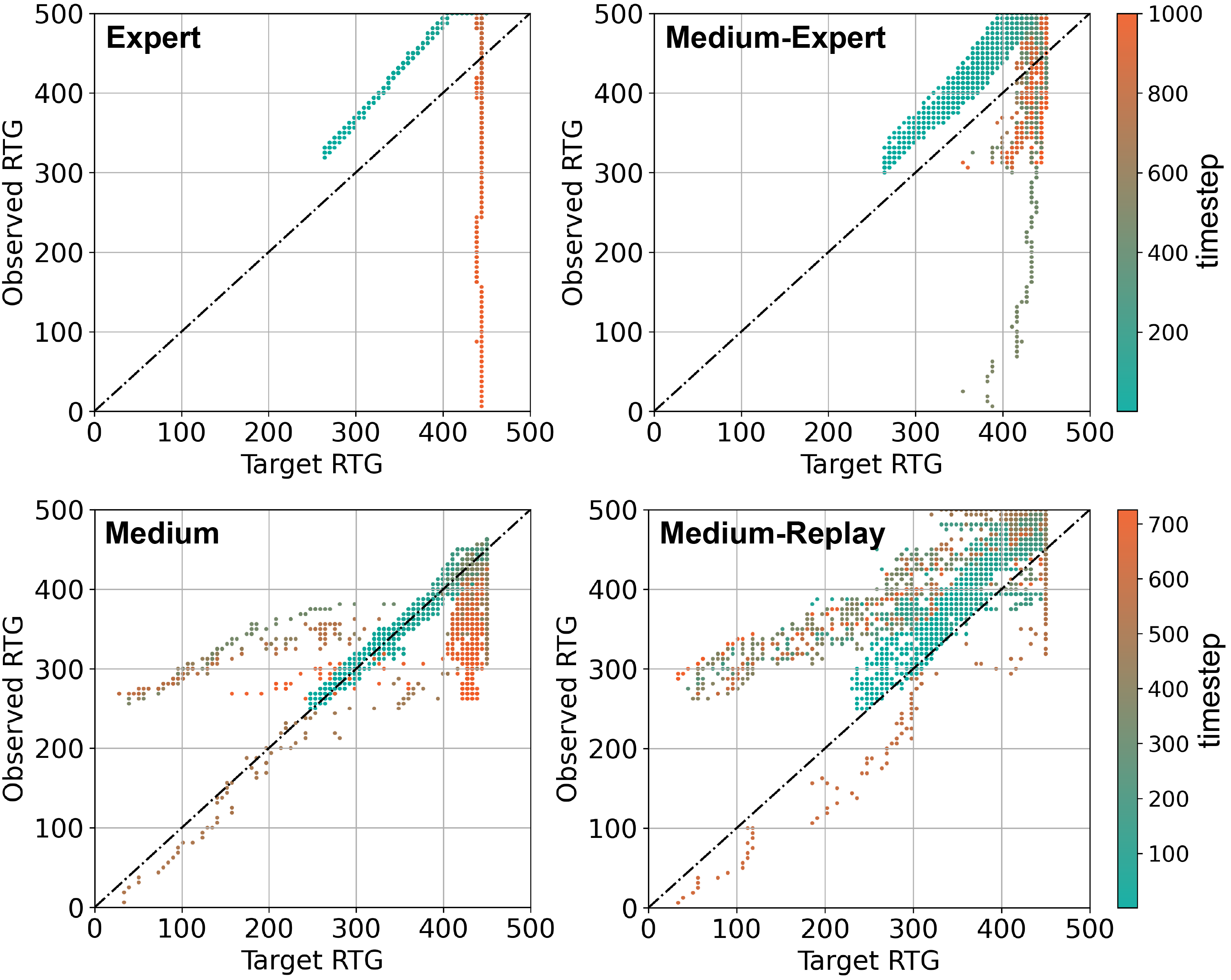}
    \caption{The relationship between target RTG and observed RTG in the Walker2D task with 4 datasets. Color represents the timestep in the trajectory.}
    \label{fig:target-observed}
    \vspace{-1mm}
\end{figure}

\subsection{Influence of Components (\qa{Q4})}

To study the contribution of each module in our model, we conduct ablation experiments by removing one component at one time and show the results in Table~\ref{tab:component}. 
We first test the model without using $\mathcal{L}_1$, which is important to solve the RTG independence problem. We find that removing this term causes a significant performance drop. We also observe that the model with $\mathcal{L}_1$ tends to ignore the RTG condition. 
We then modify the InfoNCE~\cite{oord2018representation} loss by using the uniform distribution instead of sampling from $\bar{\beta}_{\theta}(a|s)$. This modification also harms the performance since most negative samples from the uniform distribution are far from the valid action space, especially in high-dimensional action space.
Finally, we remove the Bayesian reparametrization (BR), which degenerates the model to a vanilla RCRL method.
We find that this variant achieves similar performance to the DT model.

\begin{table}[t]
\centering
\caption{Ablation study of different components in our method in the Walker2D task.}
\small{
  \begin{tabular}{l|ccc}
    \toprule
    \textbf{Model} & \textbf{Med-Reply}  & \textbf{Medium} & \textbf{Med-Expert} \\
    \midrule
    Full model               &  80.6  &  82.3  &  111.0  \\
    w/o $\mathcal{L}_{1}(\theta)$      &  72.3  &  79.5  &  108.9  \\
    w/o $a'\sim \bar{\beta}_{\theta}(a|s)$  &  77.1  &  80.6  &  110.3  \\
    w/o BR                   &  67.1  &  75.4  &  109.1  \\
    \bottomrule
  \end{tabular}
}
\label{tab:component}
\vspace{-3mm}
\end{table}

\section{Conclusion}

How to design appropriate inductive biases to improve generalization on high RTG inputs during training time and to avoid out-of-distribution RTG queries during the testing time are two core challenges in RCRL that were largely ignored by previous work.
This paper addresses these core challenges by proposing a novel set of inductive biases named Bayesian Reparameterized RCRL. 
Inspired by Bayes' theorem and causal relationships between random variables, our method successfully encodes the critical information that different RTG values are not independent classification problems but competitive. 
We also provide a causality perspective of our method to show that our parameterization of RCRL is in line with the ground-truth data generation process, which gains robustness to distribution shifts.
We demonstrate on standard offline benchmarks how our method significantly improved the generalization performance over previous methods. 
One potential limitation of our method is the additional computation introduced by the training and inference of the energy-based model compared to discriminative models used in Vanilla RCRL.


\bibliography{ref}
\bibliographystyle{icml2023}

\newpage
\appendix
\onecolumn

\section{Potential Negative Societal Impacts}

The main negative social impact of offline RL is that the learned policy purely relies on the dataset. Therefore, the policy could be subject to any bias in the dataset. Although our proposed method achieves strong out-of-distribution generalization, it may still be influenced by damaged data points. 
One way to mitigate this problem is to add a sanity check process before the training to ensure that the dataset is safe to use.

\section{Theoretical Proof}

\subsection{Proof of Theorem~\ref{the:2}}
\label{app:proof}

\begin{proof}
    For the first claim, we know that
    $\text{KL}(\pi^\delta(\mathcal{T})||\beta(\mathcal{T}))=\mathbb{E}_{\mathcal{T}\sim \pi^\beta}[\log \frac{\pi^\delta}{\beta}(\mathcal{T})]$. Then, we can get
\begin{align*}
\log\pi^\delta(\mathcal{T})/\beta(\mathcal{T}) =& \sum_{i=1}^N\log \pi^\delta(a_i|s_i)-\log\beta(a_i|s_i)\\
=& \sum_{i=1}^N\log\beta(Z\geq\theta_\delta|a_i,s_i)-\log \beta(Z\geq\theta_\delta|s_i)\\
\leq& -\sum_{i=1}^N \log\beta(Z\geq \theta_\delta|s_i) \\
\leq& -N\log \delta
\end{align*}
Combine these two formulas, we have $\text{KL}(\pi^\delta(\mathcal{T})||\beta(\mathcal{T}))\leq -N\log\delta$.

For the second claim, we first prove the following lemma: 
\begin{lemma}
    $X$ is a random variable, then for any $c\in\mathbb{R}$, we have $\mathbb{E}[X|X\geq c]\geq \mathbb{E}[X]$.
\end{lemma}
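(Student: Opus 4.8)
The plan is to prove the lemma by decomposing $\mathbb{E}[X]$ according to whether the event $\{X \ge c\}$ occurs and then showing the resulting correction term has the right sign. Write $q = P(X \ge c)$. Since the lemma is meant to be applied with $c = \theta_\delta(s)$ and $q = P(Z \ge \theta_\delta(s)) \ge \delta > 0$, the conditional expectation $\mathbb{E}[X \mid X \ge c]$ is well defined, so I will assume $q > 0$ throughout. By the law of total expectation,
\[
\mathbb{E}[X] = q\,\mathbb{E}[X \mid X \ge c] + (1-q)\,\mathbb{E}[X \mid X < c],
\]
where the second term is simply omitted when $P(X < c) = 0$.

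The key step is to rearrange this identity into
\[
\mathbb{E}[X \mid X \ge c] - \mathbb{E}[X] = (1-q)\bigl(\mathbb{E}[X \mid X \ge c] - \mathbb{E}[X \mid X < c]\bigr),
\]
and then argue the right-hand side is nonnegative. This follows from the elementary bounds $\mathbb{E}[X \mid X \ge c] \ge c$ and $\mathbb{E}[X \mid X < c] \le c$, which hold because the conditioning forces $X$ onto the respective half-line; combined with $1 - q \ge 0$ these give the claim, with equality precisely when $q = 1$ (i.e. $P(X < c) = 0$). An equivalent one-line route is to observe $\mathbb{E}\bigl[(X-c)\,\mathbbm{1}\{X \ge c\}\bigr] \ge 0$, but the total-expectation decomposition is the cleanest to write out in full.

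The only care needed is in the degenerate cases: the conditional expectations must be well defined (guaranteed by $q \ge \delta > 0$ in the intended application) and one must allow $q = 1$, where the inequality degenerates to an equality. I do not expect any genuine obstacle inside the lemma itself — it is a routine monotonicity fact about truncated means.

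The real content, and the place where the second claim of Theorem~\ref{the:2} actually lives, is in applying this lemma with $X = Z^\beta(s,a)$ (the randomness being over $a \sim \beta(\cdot \mid s)$ together with the return) and $c = \theta_\delta(s)$. Because $\pi^\delta(a \mid s) = \beta(a \mid Z(s,a) \ge \theta_\delta(s), s)$ is exactly the distribution of $a$ conditioned on $\{X \ge c\}$, the lemma yields the one-step improvement $\mathbb{E}_{a\sim\pi^\delta}[Z^\beta(s,a)] \ge \mathbb{E}_{a\sim\beta}[Z^\beta(s,a)] = V^\beta(s)$. To upgrade this to $V^\beta(s) \le V^{\pi^\delta}(s)$ for every $s$, I would then propagate the pointwise improvement through the Bellman recursion by a standard policy-improvement argument (monotonicity of the Bellman operator, or induction on the horizon). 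That lifting step, rather than the lemma, is where I expect the remaining difficulty to concentrate.
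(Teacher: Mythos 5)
Your proof is correct and is essentially the paper's own argument: both decompose $\mathbb{E}[X]$ over the events $\{X\geq c\}$ and $\{X<c\}$ and conclude via the elementary bounds $\mathbb{E}[X\mid X<c]\leq c\leq \mathbb{E}[X\mid X\geq c]$, with your version merely phrased through the law of total expectation instead of truncated expectations divided by probabilities. Your explicit handling of the degenerate cases $P(X<c)=0$ and $q=1$ is a small improvement over the paper, which implicitly divides by $P(X<c)$.
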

We know that
\begin{equation}
    \mathbb{E}[X|X\geq c]=\mathbb{E}_X[X\cdot \mathbbm{1}_{X\geq c}]/P(X\geq c).
\end{equation}
On the other hand, 
\begin{equation}
\begin{split}
    \mathbb{E}[X]\cdot P(X\geq c) 
    = & \mathbb{E}[X\cdot \mathbbm{1}_{X\geq c}]P(X\geq c)+\mathbb{E}[X\cdot \mathbbm{1}_{X< c}]P(X\geq c) \\
    = & \mathbb{E}[X\cdot \mathbbm{1}_{X\geq c}][1-P(X< c)]+\mathbb{E}[X\cdot \mathbbm{1}_{X< c}]P(X\geq c),
\end{split}
\end{equation}
where $\mathbbm{1}_{X\geq c}$ is an indicator function that outputs 1 when $X\geq c$ is satisfied. Thus we only need to show 
\begin{equation}
    \mathbb{E}[X\cdot \mathbbm{1}_{X< c}]/P(X< c)\leq \mathbb{E}[X\cdot \mathbbm{1}_{X\geq c}]/P(X\geq c). 
\end{equation}
This is obvious because
\begin{equation}
    \mathbb{E}[X\cdot \mathbbm{1}_{X< c}]\leq c\cdot P(X<c),
\end{equation}
\begin{equation}
    \mathbb{E}[X\cdot \mathbbm{1}_{X\geq c}]\geq c\cdot P(X\geq c).
\end{equation}
Therefore, for a state $s\in S$, we can see that 
\begin{equation}
\begin{split}
    V^\pi(s) 
    =& \mathbb{E}_{a\sim \pi,\pi}[Z^\pi(s,a)] \\
    \geq & \mathbb{E}_{a\sim \beta,\beta}[Z^\beta(s,a)|Z^\beta(s,a) \geq \theta_\delta(s,a)]\\
    \geq & V^\beta(s)
\end{split}
\end{equation}
\end{proof}

\section{Additional Experiment Results}

\subsection{RTG Independence Problem in DT}

The policy model tends to isolate the prediction problems $a=f_{R_i}(s), i=1,2\cdots$ according to different RTGs $R_i$, instead of learning a generalizable mapping from $R$ to action. 
In RCRL, this limitation can briefly be understood as the model can only learn from high RTG samples, the low RTG samples cannot help improve the model's performance in high RTG regions.
We confirm this tendency by conducting an additional experiment (results in Table~\ref{app:rtg_problem}). We fit three DT models using different variants of the medium-replay dataset. Top $x\%$ means we only select the top $x\%$ of trajectories, ordered by episode RTG. We observe that these models achieve similar performance, which indicates that the prediction conditioned on high RTG is independent of the training samples with low RTG. In addition, we use the same setting to test our method and find that removing the low RTG samples has a negative influence on the results.

\begin{table}[h]
\centering
\caption{Performance on different portions of the dataset.}
\small{
  \begin{tabular}{l|cccccc}
    \toprule
    \textbf{Dataset} & DT (top $100\%$)  & DT (top $50\%$)  & DT (top $20\%$)  & Ours (top $100\%$)  & Ours (top $50\%$)  & Ours (top $20\%$) \\
    \midrule
    Halfcheetch      & 36.0              & 37.2             & 36.5             & 42.3 & 40.5 & 38.1 \\
    Hopper           & 77.3              & 78.5             & 77.1             & 98.3 & 94.1 & 90.3 \\
    Walker2D         & 65.5              & 64.4             & 66.2             & 80.6 & 76.4 & 73.2 \\
    \bottomrule
  \end{tabular}
}
\label{app:rtg_problem}
\end{table}

\subsection{Sampling Bias Dominance Problem in DT}

Given state $s$, we assume $c(s)$ is the threshold for our target RTG, which is usually high. The dataset is sub-optimal, so $p_d(R>c(s)|s)<\epsilon$, where $\epsilon$ is a small number. Consider the training dataset $D=\{(s_i,a_i,R_i)\}_{i=1}^N$. As we show above, due to RTG independence, the model in fact only can learn from $D_c=\{(s,a,R)\in D| R>c(s)\}$. However $|D_c|<<|D|$. Then the trained model will be affected by the large sampling bias due to $|D_c|$ being small. 

\subsection{OOD Conditioning in DT}

\begin{wrapfigure}{r}{0.6\textwidth} 
    \vspace{-6mm}
    \centering 
    \includegraphics[width=0.6\textwidth]{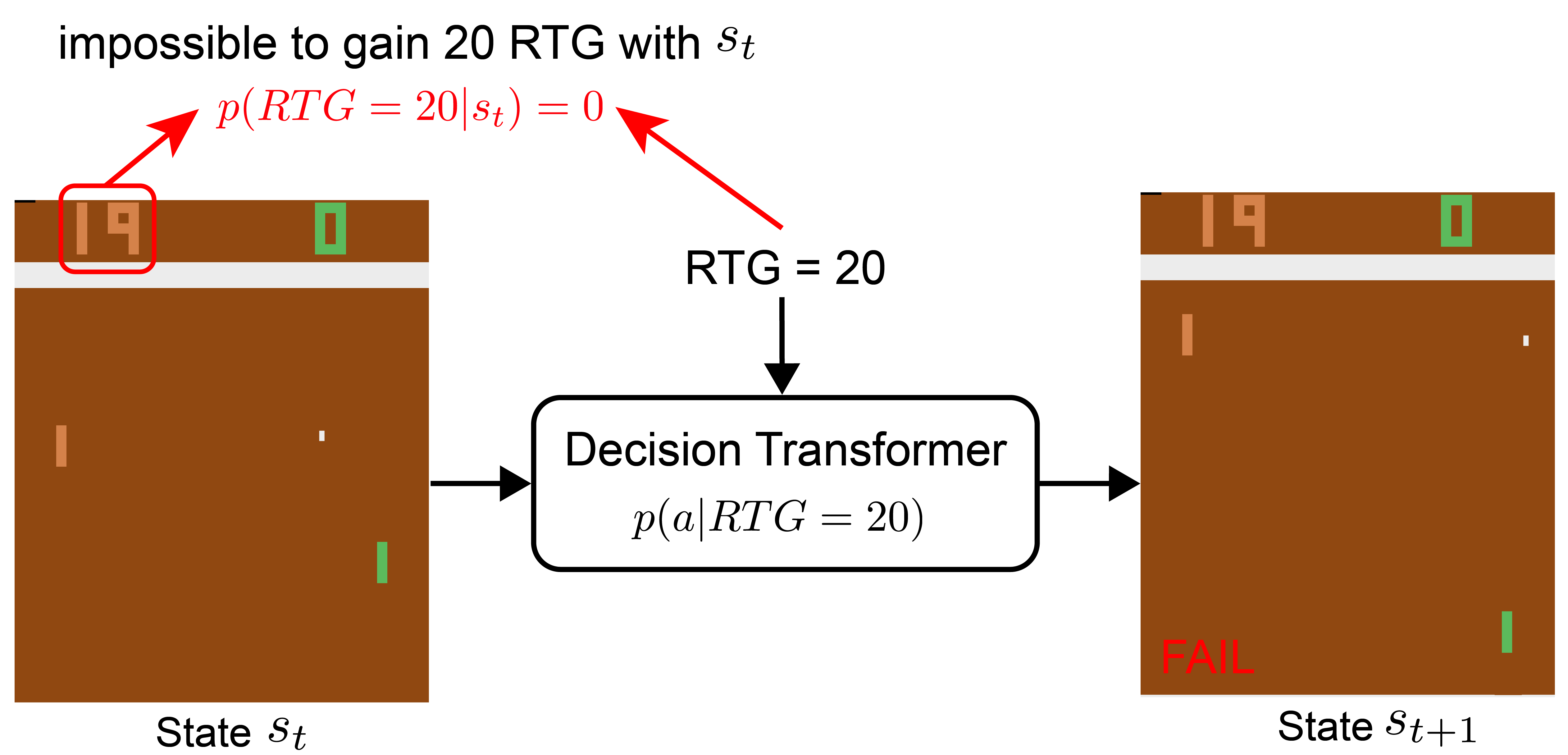}
    \vspace{-6mm}
    \caption{Example of the OOD condition problem.}
    \vspace{-4mm}
    \label{app:ood_condition}
\end{wrapfigure}

Given a state $s$, during training, we have the distribution of RTG $p_d(R|s)$ while during testing we aim to sample high RTG from $p_t(R|s)$. These two distributions can have non-overlap supports, namely $\text{KL}(p_t||p_d)$ usually can be $+\infty$. During test time, given a state $s$ and target RTG $R_t\sim p_t(R|s)$, the model try to predict $a\sim p(a|s, R=R_{t})$, however $R_t$ may be out-of-distribution (OOD) for $p_d$ in the sense that $p_d(R_t|s)=0$.
We provide a concrete example of the OOD target RTG in the Atari Pong game. The image is shown in Figure~\ref{app:ood_condition}. The game ends when one player gains 20 points. Since the RL agent already loses 19 points, it is almost impossible to obtain $RTG=20$ in this match. Therefore, $RTG=20$ is an OOD condition for the DT model.

\section{Experiment Details}

\subsection{Experiment Device}

The experiments were conducted on a device with 256GB memory and 2 $\times$ NVIDIA RTX A6000 GPUs. The Atari experiments require $\sim$150GB of memory to load the 10\% Atari dataset.

\subsection{Inference Optimizer of EBMs}

In Gym-Mujoco experiments, we use a derivative-free optimizer (DFO) proposed in~\cite{florence2022implicit} to infer the energy-based model. As stated in \cite{florence2022implicit}, other advantaged optimizers such as Langevin MCMC~\cite{welling2011bayesian} could improve the efficiency for high-dimensional cases.

We show the statistic of running time for inference of EBM in Table~\ref{app:inference}.
Although the inference spends more time than directly using conditional policy, the cost is still acceptable since we only need a few iterations.

\begin{table}[h]
\centering
\caption{Inference time of EBM.}
\small{
  \begin{tabular}{l|ccccccc}
    \toprule
    \textbf{Environment} & Halfcheetch &  Hopper  &  Walker2D  &  Breakout  &  Q*bert    &  Pong    &  Seaquest \\
    \midrule
    Inference time       &  0.0082 s   & 0.0080 s & 0.0082 s   &  0.0029 s  &  0.0031 s  & 0.0032 s &  0.0031 s \\
    \bottomrule
  \end{tabular}
}
\label{app:inference}
\end{table}

\subsection{Hyperparameters}

The hyperparameters used in Gym-Mujoco experiments and Atari experiments are summarized in Table~\ref{tab:hyper-d4rl} and Table~\ref{tab:hyper-atari}, respectively. We use the same hyperparameters for all experiments in the same benchmark. The source code of our experiments will be released after the blind review process.

\begin{table}[ht]
\centering
\small{
  \begin{tabular}{c|c|c}
    \toprule
    \textbf{Notation} & \textbf{Parameter Description} & \textbf{Value}  \\
    \midrule
                            & training iteration                         & 70,000 \\
                            & learning rate                              & 0.0005 \\
                            & batch size                                 & 512 \\
                            & action penalty                             & 0.0 \\
    $\lambda$               & weight of $\mathcal{L}_1(\theta)$          & 1.0 \\
    $|B|$                   & number of reward bucket                  & 80 \\
    $\gamma$                & reward discount                            & 0.99 \\
    $V_{min}$               & minimal bucket RTG                         & 0 \\
    $V_{max}$               & maximal bucket RTG                         & 1,200 \\
    $N_{A'}$                & number of negative samples during training & 256 \\
    \midrule
    $\delta$                & inference threshold                        &  0.1  \\
                            & number of episodes for each testing point  &  10  \\
    \midrule
                            & Number of iterative in DFO       & 5 \\
                            & Number of samples in DFO         & 65,536 \\
                            & Noise shrink parameter in DFO    & 0.9 \\
                            & Scale of noise in DFO            & 0.5 \\
    \bottomrule
  \end{tabular}
}
\caption{Hyperparameters for Gym-Mujoco experiments}
\label{tab:hyper-d4rl}
\end{table}

\begin{table}[ht]
\centering
\small{
  \begin{tabular}{c|c|c}
    \toprule
    \textbf{Notation} & \textbf{Parameter Description} & \textbf{Value}  \\
    \midrule
                            & training iteration                         & 3,000,000 \\
                            & learning rate                              & 0.00025 \\
                            & action penalty                             & 0.5 \\
                            & target network update frequency            & 8,000 \\
    $B$                     & batch size                                 & 32 \\
    $\lambda$               & weight of $\mathcal{L}_1(\theta)$          & 20.0 \\
    $|B|$                   & number of reward bucket                    & 51 \\
    $\gamma$                & reward discount                            & 0.95 \\
    $V_{min}$               & minimal bucket of RTG                      & 0 \\
    $V_{max}$               & maximal bucket RTG                         & 10 \\
    \midrule
    $\delta$                & inference threshold                         &  0.1  \\
    $\epsilon$              & exploration ratio during test               &  0.01  \\
                            & number of episodes for each testing point   &  10  \\
    \bottomrule
  \end{tabular}
}
\caption{Hyperparameters for Atari experiments}
\label{tab:hyper-atari}
\end{table}

\end{document}